\DeclareMathAlphabet{\pazocal}{OMS}{zplm}{m}{n}
\newtheorem{assumption}{Assumption}
\newtheorem{theorem}{Theorem}
\newtheorem{lemma}{Lemma}
\title{\LARGE \bf
Data-driven Force Observer for Human-Robot Interaction with Series Elastic Actuators using Gaussian Processes
}
\author{Samuel Tesfazgi$^{1,*}$, Markus Keßler$^{1,*}$, Emilio Trigili$^{2}$, Armin Lederer$^{3}$ and Sandra Hirche$^{1}$
\thanks{This work was supported by the Horizon 2020 research and innovation program of the European Union under grant agreement no. 871767 of the project ReHyb.}
\thanks{$^{1}$ S. Tesfazgi, M. Keßler and S. Hirche are with the Chair of Information-oriented Control (ITR), TUM School of Computation, Information and Technology, Technical University of Munich, 80333 Munich, Germany {\tt\small [samuel.tesfazgi, markus.kessler, hirche]@tum.de} \looseness=-1}%
\thanks{$^{2}$ E. Trigili is with the BioRobotics Institute, Scuola Superiore Sant’Anna, 56025 Pontedera, Italy {\tt\small emilio.trigili@santannapisa.it}}%
\thanks{$^{3}$ A. Lederer is with the Learning \& Adaptive Systems Group, Department of Computer Science, ETH Zurich, 8092 Zurich, Switzerland {\tt\small armin.lederer@inf.ethz.ch}}%
\thanks{$^{*}$ These authors contributed equally.}
}
\begin{document}

\maketitle
\thispagestyle{empty}
\pagestyle{empty}

\begin{abstract}

Ensuring safety and adapting to the user's behavior are of paramount importance in physical human-robot interaction. Thus, incorporating elastic actuators in the robot's mechanical design has become popular, since it offers intrinsic compliance and additionally provide a coarse estimate for the interaction force by measuring the deformation of the elastic components. While observer-based methods have been shown to improve these estimates, they rely on accurate models of the system, which are challenging to obtain in complex operating environments. In this work, we overcome this issue by learning the unknown dynamics components using Gaussian process (GP) regression. By employing the learned model in a Bayesian filtering framework, we improve the estimation accuracy and additionally obtain an observer that explicitly considers local model uncertainty in the confidence measure of the state estimate. Furthermore, we derive guaranteed estimation error bounds, thus, facilitating the use in safety-critical applications. We demonstrate the effectiveness of the proposed approach experimentally in a human-exoskeleton interaction scenario.

\end{abstract}

\section{INTRODUCTION}

Robots are increasingly deployed in application scenarios, 
where physical interaction with a human operator is required. 
In order to achieve safe interaction forces, robots with series elastic actuators (SEAs) are a popular choice and have been employed in safety-critical domains such as physical human-robot interaction \cite{yu2015seadob}. Since SEAs decouple motor and load-side through an elastic spring, compliance is inherently guaranteed. 
Moreover, the elongation of the spring is often used as a measure of interaction force \cite{sea95}, which is particularly useful in rehabilitation robots for instance to quantify the patient's participation, e.g., in assist-as-needed schemes \cite{emilio}.
However, while the spring elongation is readily available in SEAs, 
it may not reflect an accurate measure of the external force if dynamic effects are acting on the load-side \cite{park2017seakf}. Furthermore, it is challenging to isolate specific contributions to the spring interaction torque, if multiple forces are acting on the SEA load-side, e.g., during human-robot co-manipulation \cite{manipulation}. Thus, the use of external force observers is proposed to mitigate these issues. 

A widely used approach is the Kalman filter (KF), which allows to consider uncertainty in the estimation by adopting a Bayesian framework \cite{kalman1961new}. 
In \cite{liu2019momentumkf} for instance, a KF is used to estimate the generalized momentum of an upper-limb exoskeleton to infer human interaction forces. Another work 
explicitly exploits additional information encoded in the spring torque of a SEA, by augmenting the state vector by the external load-side torque and utilizing the KF as a full-state observer for estimation \cite{park2017seakf}. However, while the KF allows to consider modelling errors through process noise, inaccurate dynamics models still lead to a deteriorating quality of the estimation \cite{liu2021sensorless}. This is especially prevalent in physical human-robot interaction, as the dynamics are strongly influenced by the human, thus, requiring adaption to the individual user. \looseness=-1

A common approach to achieve this adaptation relies on learning a model of the unknown dynamics, which can in turn be used in a learning-augmented, model-based observer. 
Gaussian process (GP) regression 
has become popular for learning such models in recent years since it exhibits strong theoretical guarantees such as an inherent variance-bias trade-off, a high data-efficiency and a strong expressivity \cite{Rasmussen2006}. Moreover, it explicitly \textcolor{black}{provides} an uncertainty representation and admits the derivation of prediction error bounds \cite{GPbounds}, which is particularly useful in safety-critical applications. 
While GPs have been employed in the context of momentum observers to estimate external forces \cite{santina2020softrobotest, Evangelisti}, to the best of the authors' knowledge there is no work using GPs in a KF framework to derive a force observer for robots with series elastic actuators.
The general approach of combining GP regression and Bayesian filtering (GP-BayesFilter) is introduced in \cite{ko2009gp}, where the motion model of a robotic blimp is learned using GP regression. 
However, existing GP-BayesFilters lack theoretical guarantees on the achieved estimation error, which are vital in human-robot interaction. 

In this work, we propose a novel approach for designing a learning-augmented force observer by combining GP regression with Bayesian filtering to learn residual load-side dynamics of a SEA robot and augmenting the KF prediction model accordingly. Differently to \cite{ko2009gp}, we do not learn a state transition residual directly, but instead propose a data-generation architecture that allows to infer residual dynamics instead, which additionally lends itself for online model inference. Finally, we address the lack of theoretical estimation guarantees of GP-BayesFilters by proposing a novel approach of combining GP prediction error bounds with ellipsoid set bounds of KFs \cite{noack2010bounding}. We demonstrate the effectiveness of the proposed approach in human-exoskeleton experiments of a SEA robot with one degree of freedom.

\section{Problem Statement}

We consider the physical interaction between a human operator and an elastic joint robot, e.g., an exoskeleton with series elastic actuators (SEAs). The dynamics of such a rigid link robot with $n\in \mathbb{N}$ elastic joints is described by a set of Euler-Lagrange equations\footnote{Notation:
Lower/upper case bold symbols denote vectors/matrices, 
$\mathbb{R}_+$/$\mathbb{N}_+$ all real/integer positive 
numbers, $\bm{I}_{n\times n}$ the $n\times n$ identity matrix, $\bm{0}_{n\times n}$ the $n\times n$ matrix with zero entries and $\oplus$ the Minkowski sum. We denote an ellipsoidal set with $\pazocal{E}(\bm{c},\bm{X})$ with center $\bm{c}$ and shape matrix $\bm{X}$ define the set as $\pazocal{E}(\bm{c},\bm{X})= \{ \boldsymbol{x}\in \mathbb{R}^n | (\boldsymbol{x}- {\boldsymbol{c}})^T \boldsymbol{X}^{-1} (\boldsymbol{x}- {\boldsymbol{c}}) \leq 1 \}$. } \cite{Spong1987}
\begin{subequations}
\label{eq:sea}
    \begin{align} 
    \bm{M}(\bm{q})\ddot{\bm{q}} + \bm{C}(\bm{q},\dot{\bm{q}})+ \bm{N}(\bm{q},\dot{\bm{q}}) + \bm{\tau}_s(\bm{\theta}_s,\dot{\bm{\theta}}_s)  &= \bm{\tau}_{\text{ext}}
    \label{eq:loadside} \\
    \bm{J}(\bm{\theta}_m)\ddot{\bm{\theta}}_m + \bm{D}_m \dot{\bm{\theta}}_m - \bm{\tau}_s(\bm{\theta}_s,\dot{\bm{\theta}}_s) &= \bm{\tau}_m, 
    \label{eq:motorside}
    \end{align}
\end{subequations}
where \eqref{eq:loadside} describes the load-side dynamics and \eqref{eq:motorside} the motor-side dynamics. Here, $\bm{q} \in \mathbb{R}^n$ represents the joint angles, $\bm{\theta}_m \in \mathbb{R}^n$ represents the motor angles, ${\bm{M}(\bm{q})\in \mathbb{R}^{n\times n}}$ is the inertia matrix of the rigid link, $\bm{J}(\bm{\theta}_m)\in\mathbb{R}^{n\times n}$ is the inertia matrix of the motors, $\bm{C}(\bm{q},\dot{\bm{q}})\in \mathbb{R}^{n\times n}$ denotes Coriolis, centrifugal and gravitational terms on the load side, ${\bm{D}_m \in \mathbb{R}^{n\times n}}$ denotes the motor damping matrix and $\bm{N}(\bm{q},\dot{\bm{q}})\in \mathbb{R}^{n}$ represents other, lumped nonlinear effects on the load-side such as friction. The column vector $\bm{\tau}_m$ describes the torque inputs provided by the motors, while $\bm{\tau}_\text{ext}$ corresponds to any external torques acting on the load side, e.g., environmental disturbances or human torques. Both load and motor side are coupled through the elastic transmission torque \looseness=-1
\begin{equation}
    \bm{\tau}_s(\bm{\theta}_s,\dot{\bm{\theta}}_s) = \bm{K}_s(\bm{\theta}_s,\dot{\bm{\theta}}_s) \bm{\theta}_s + \bm{D}_s(\bm{\theta}_s,\dot{\bm{\theta}}_s) \dot{\bm{\theta}}_s,
    \label{eq:spring_torque}
\end{equation}
which is dependent on the spring deformation $\bm{\theta}_s=\bm{q}-\bm{\theta}_m$ with nonlinear stiffness $\bm{K}_s \in \mathbb{R}^{n\times n}$ and damping matrix ${\bm{D}_s \in \mathbb{R}^{n\times n}}$. \looseness=-1 
The goal is to estimate the \emph{actively} generated torques of the human operator acting on the load-side of the elastic joint robot. To this end, we decompose the external torques $\bm{\tau}_\text{ext}$ into passive torques ${\bm{\tau}_\text{h,pas}(\bm{q},\dot{\bm{q}},\ddot{\bm{q}}) \colon \mathbb{R}^n\times\mathbb{R}^n\times\mathbb{R}^n \to \mathbb{R}^n}$ due to inertial, gravitational and viscoelastic torques of the human limb and active torques ${\bm{\tau}_\text{h,act}\in\mathbb{R}^n}$ generated by volitional muscle activity, i.e., ${\bm{\tau}_\text{ext}= \bm{\tau}_\text{h,pas}+\bm{\tau}_\text{h,act}}$. Since the precise identification of the parameters governing the human passive dynamics is challenging, we merely assume that an approximate model of ${\bm{\tau}_{\text{h,pas}}}$ is available, e.g., using anthropometric tables \cite{just2020human}. Also, we assume that the nonlinear effects $\bm{N}(\bm{q},\dot{\bm{q}})$ on the load-side are unknown. 
This restriction reflects the fact that it is generally difficult to model effects such as friction using parametric approaches. Thus, we can lump all uncertain dynamics components into the unknown function $\bm{f}(\bm{z})=\bm{N}(\bm{q},\dot{\bm{q}})-\bm{\tau}_\text{h,pas}(\bm{q},\dot{\bm{q}},\ddot{\bm{q}}) $, where $\bm{z}\coloneqq \big[\bm{q}^\intercal\, \dot{\bm{q}}^\intercal \, \ddot{\bm{q}}^\intercal\big]^\intercal $ is the concatenation of joint angles, velocities and acceleration, such that we can write \eqref{eq:loadside} as 
\begin{equation}
    \bm{M}(\bm{q})\ddot{\bm{q}} + \bm{C}(\bm{q},\dot{\bm{q}})+ \bm{f}(\bm{z}) + \bm{\tau}_s(\bm{\theta}_s,\dot{\bm{\theta}}_s)  = \bm{\tau}_{\text{h,act}},
    \label{eq:unknwn_load} 
\end{equation}
To obtain an accurate estimate of $\bm{\tau}_{\text{h,act}}$  
despite the residual dynamics error, we assume access to the following measurements to infer a model of $\bm{f}(\cdot)$. \looseness=-1
\begin{assumption}
    \label{ass:data}
    The motor torque $\bm{\tau}_m$, the load-side kinematics $\{\bm{q}, \dot{\bm{q}}, \ddot{\bm{q}}\}$ and the motor kinematics $\{\bm{\theta}_m, \dot{\bm{\theta}}_m, \ddot{\bm{\theta}}_m\}$ are available 
    for model inference.
\end{assumption}

The motor torque $\bm{\tau}_m$ is directly computable from the applied current and motor constant. Moreover, SEAs are typically equipped with encoders on load- and motor-side such that $\bm{q}$ and $\bm{\theta}_m$ are measurable and angular velocities and accelerations can be obtained through numerical differentiation. Note that we do not assume any force/torque measurements on the load-side, which would require expensive sensors that additionally often suffer from measurement noise. 

Based on the above, we consider the problem of estimating the active human torque $\bm{\tau}_\text{h,act}$ using a model-based observer. Since the estimation of $\bm{\tau}_\text{h,act}$ requires an accurate model of the human-robot system, we learn the unknown dynamics component, including the human passive dynamics, and augment the observer with the inferred $\bm{f}(\cdot)$. 
Finally, due to the safety-critical application scenario, guarantees for the estimate $\bm{\hat{\tau}}_\text{h,act}$ 
should be obtained. We aim to derive guarantees in the form of probabilistic estimation error bounds
\begin{equation}
    Pr\Big(|\hat{\bm{\tau}}_{\text{h,act}}-\bm{\tau}_{\text{h,act}}|\leq \rho \Big) \geq 1-\delta,
    \label{eq:bounds}
\end{equation}
with the upper bound $\rho\in\mathbb{R}^+$ evaluated for a probability level determined by $\delta\in(0,1)$. The bound \eqref{eq:bounds} represents the confidence in the estimated torque $\bm{\tau}_\text{h,act}$ 
and ensures reliable use in physical human-exoskeleton interaction. 

\textit{Remark:} While we consider the estimation of active human torques in this work, the proposed approach is not limited to this application and may also be employed as a general external force observer. In this case, the unknown dynamics are primarily introduced by unmodelled component on the load-side of the robot, e.g., due to nonlinear friction. Thus, no decomposition of the external force $\bm{\tau}_{\text{ext}}$ is required.

\section{Preliminaries}

To estimate the active human torque, we employ Bayesian filtering together with GP regression. Fundamentals of state estimation using Bayesian filtering are introduced in \Cref{subsec:bayesian}, before explaining GP regression in \Cref{subsec:gpreg}. 

\subsection{State Estimation using Bayesian Filtering} \label{subsec:bayesian}
A well-studied approach to estimate non-measurable quantities of interest is by means of Kalman filter (KF). 
While the KF was first developed for linear systems \cite{kalman1961new}, several extensions to nonlinear systems, e.g., the extended KF (EKF)
\cite{radke2006survey}, have been proposed since. Due to the nonlinear dynamics of human-exoskeleton system, we adopt the EKF framework in the following. Consider a nonlinear system
\begin{align}
\label{eq:nonlinkfsys}
        \bm{x}_{k+1} &= \bm{f}_k(\bm{x}_{k},\bm{u}_k) + \bm{w}_{k}  \\
        \bm{y}_{k} &= \bm{H}\bm{x}_{k}+ \bm{v}_{k}
\label{eq:nonlinkOsys}
\end{align}
at time steps $k \in \mathbb{N}^+$ with system states $\bm{x} \in \mathbb{R}^d$, control input $\bm{u} \in \mathbb{R}^r$ and measurement vector ${\bm{y} \in \mathbb{R}^m}$ as well as nonlinear state transitions ${\bm{f}\colon\mathbb{R}^d\times\mathbb{R}^r\to\mathbb{R}^d}$ and linear observation model $\bm{H}\in\mathbb{R}^{m\times d}$. 
The process $\bm{w}_{k} \sim \pazocal{N}(\bm{0},\bm{Q}_k)$ and measurement noise $\bm{v}_{k} \sim \pazocal{N}(\bm{0},\bm{R}_k)$ follow a Gaussian distribution with variance $\bm{Q}_k\in \mathbb{R}^{d\times d}$ and $\bm{R}_k\in \mathbb{R}^{m\times m}$, and are assumed uncorrelated and white, i.e., $\mathbb{E}[\bm{v}_k, \bm{w}_{k}] = 0 \ \forall k$, $\mathbb{E}[\bm{v}_k, \bm{v}_{j}] = 0 \ \forall k \neq j$ and $\mathbb{E}[\bm{w}_k, \bm{w}_{j}] = 0 \ \forall k \neq j$. Note that in the considered SEA scenario, the measurement model $\bm{H}$ is typically linear, thus, \eqref{eq:nonlinkOsys} does not pose a restriction. 

The EKF provides estimates in the form of a mean state estimate $\hat{\bm{x}}_{k}$ together with an uncertainty description through the covariance $\bm{P}_k\in\mathbb{R}^{d\times d}$. In particular, the EKF follows a Bayesian approach where a \textit{prior} distribution, i.e., prior mean $\hat{\bm{x}}_{k}^-$ and covariance $\bm{P}_{k}^-$, are predicted using the dynamics model \eqref{eq:nonlinkfsys} and the previous state estimate $\hat{\bm{x}}_{k-1}$, which are then updated by conditioning the \textit{prior} distribution on measurements $\bm{y}_k$ to obtain a \textit{posterior} distribution $\hat {\bm {x} }_{k}$ and $\bm {P} _{k}$. The complete procedure is shown in \Cref{alg:ekf}. An advantage of the KF framework is that it permits an elegant uncertainty propagation from the prediction to the update step through the Kalman gain $\bm{K}_{k}$ (Alg. \ref{alg:ekf}, line 8). 
When process noise $\bm{Q}_k$ is dominant, i.e., high model uncertainty, the prior covariance $\bm{P}_k^-$ becomes large, resulting in a high gain $\bm {K} _{k}$ and giving more weight to the measurement ${\bm{y}}_{k}$. Conversely, when measurement noise $\bm{R}_k$ dominates, the gain $\bm {K} _{k}$ is low and less weight is put to the measurement ${\bm{y}}_{k}$.

Thereby, the KF provides a convenient framework to include local model uncertainty. Thus, if the deployed prediction model \eqref{eq:nonlinkfsys} is augmented by a learned component, it can be beneficial to explicitly include the uncertainty of the inferred model in the estimation. While the process noise covariance $\bm{Q}$ may be naively set to a constant values determined by a global measure of learning error as in \cite{liu2021sensorless}, this approach cannot take local model uncertainties into account, which typically arises due to inhomogenous data distributions. On the other hand, 
learning techniques that quantify local model uncertainty facilitate dynamically updating the process noise covariance $\bm{Q}_k$ based on the local confidence of the learned model at each step $k$. A prominent learning method that quantifies uncertainty is Gaussian Process regression introduced in the following.

\begin{algorithm}[t!]
\small
\caption{Extended Kalman filter (EKF)}
\begin{algorithmic}[1]
\State \textbf{Initialize} $\hat{\bm{x}}_0$, $\bm{P}_0$

\For{k = 1,\ ...,\ K}
\Statex \quad\ \textbf{Prediction step:}
\State $\bm {F}_{k-1}=\left.{\frac {\partial \bm{f}_{k-1}}{\partial {\bm {x}}}}\right\vert _{{\hat {\bm {x}}}_{k-1},{\bm {u}}_{k-1}}$
\State $\hat{\bm{x}}_{k}^- = \bm{f}_{k-1}(\hat{\bm{x}}_{k-1},\bm{u}_{k-1})$
\State $\bm{P}_{k}^- = \bm {F}_{k-1} \bm {P}_{k-1} \bm {F} _{k-1}^T + \bm {Q} _{k}$ 
\Statex \quad\ \textbf{Update step:}
\State $\bm {K} _{k}=\bm {P} _{k}^- \bm {H}^T \big( \bm {H}\bm {P} _{k}^- \bm {H}^T +\bm {R} _{k}\big)^{-1}$
\State $\hat {\bm {x} }_{k}= \left(\bm {I}_d -\bm {K} _{k}\bm {H} _{k}\right){\hat {\bm {x} }}_{k}^- + \bm {K} _{k}{\bm {y} }_{k}$ 
\State $\bm {P} _{k}=\left(\bm {I}_d -\bm {K} _{k}\bm {H} _{k}\right)\bm {P} _{k}^-$
\EndFor
\end{algorithmic}
\label{alg:ekf}
\end{algorithm}

\subsection{Gaussian Process Regression} \label{subsec:gpreg}

    Gaussian process (GP) regression is a modern machine learning technique basing on the assumption that any finite number of evaluations
	$\{f(\bm{x}^{(1)}) \ldots f(\bm{x}^{(N)})\}$, $N\in\mathbb{N}_+$ of an unknown function $f:\mathbb{R}^{\rho}\rightarrow\mathbb{R}$, $\rho\in\mathbb{N}$, 
	at inputs $\bm{x}\in\mathbb{R}^{\rho}$ follows a joint Gaussian distribution~\cite{Rasmussen2006}. This distribution is defined using a prior mean function $m:\mathbb{R}^{\rho}\rightarrow\mathbb{R}$ and a covariance function $k:\mathbb{R}^{\rho}\times\mathbb{R}^{\rho}\rightarrow\mathbb{R}$ yielding the compact notation $\pazocal{GP}(m(\bm{x}),k(\bm{x},\bm{x}'))$. The prior mean function is commonly used to incorporate prior such as an approximate model and is commonly set to $0$, which we also assume in the following. The covariance function encodes more abstract information such as differentiability of the unknown function. Due to its infinite differentiability, the squared exponential kernel 
    \begin{align}\label{eq:SE}
	k(\bm{x},\bm{x}')=\sigma_f^2\mathrm{exp}\left(-\sum\limits_{i=1}^{\rho}\frac{(x_i-x_i')^2}{2l_i^2}\right),
	\end{align} 
    is commonly employed as covariance function, whose shape depends on the signal standard deviation ${\sigma_f\in\mathbb{R}_+}$ and the length scales $l_i\in\mathbb{R}_+$.
	Together with the target noise standard deviation $\sigma_{\mathrm{on}}\in\mathbb{R}_+$, they form the hyperparameters ${\bm{\theta}=\begin{bmatrix}
	\sigma_f& l_1 & \ldots & l_{\rho} & \sigma_{\mathrm{on}}\end{bmatrix}^\intercal}$, which are commonly obtained by maximizing the log-likelihood \cite{Rasmussen2006}.

    Given the hyperparameters, the posterior distribution conditioned on training data can be calculated under the assumption of training targets perturbed by zero mean Gaussian noise with $\sigma_{\mathrm{on}}^2$. It is straightforward to show that this posterior follows a Gaussian distribution with mean and variance  \begin{align}\label{eq:GPmean}
	\mu(\bm{x})&= \bm{y}^T\left(\bm{K}+\sigma_{\mathrm{on}}^2\bm{I}_N \right)^{-1}\bm{k}(\bm{x})\\
	\sigma^2(\bm{x})&=k(\bm{x},\bm{x})-\bm{k}^T(\bm{x})\left(\bm{K}+\sigma_{\mathrm{on}}^2\bm{I}_N \right)^{-1}\bm{k}(\bm{x}),
	\label{eq:GPvariance}
	\end{align}
	where we define the elements of the kernel vector ${\bm{k}(\bm{x})\in\mathbb{R}^{N}}$ 
	as $k_i(\bm{x})=k(\bm{x}^{(i)},\bm{x})$.

\section{Human Torque Estimation using Gaussian Processes}
In order to seamlessly integrate the GP regression into a model-based observer, we propose the filter architecture outlined in \Cref{fig:full_scheme}. It consists of a component responsible for model inference using GPs, which is detailed in \Cref{subsec:online_learning}. The learning component is used to enhance an EKF in \Cref{subsec:GPAKF} and guaranteed estimation error bounds for the proposed GP-enhanced observer are derived in \Cref{subsec:Guarantees}.  

\begin{figure}
    \centering
        \includegraphics[width=0.45\textwidth]{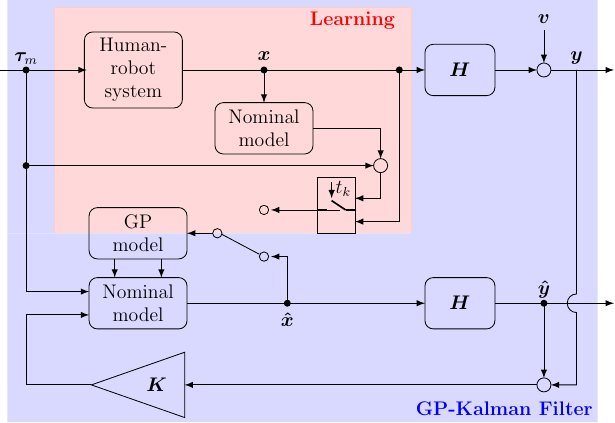}
    \caption{GP-BayesFilter architecture with learning components highlighted in red and the filtering blocks illustrated in blue.}
    \label{fig:full_scheme}
\end{figure}

\subsection{Data Generation and Model Inference} \label{subsec:online_learning}
In order to account for the unknown dynamics component $\bm{f}(\cdot)$ in \eqref{eq:unknwn_load},
it is necessary to augment the KF by a model inferred from data. 
While the use of GP models in a Bayes-filter framework by directly learning a state transition residual $\Delta\bm{x}$ is discussed in \cite{ko2009gp}, applying this approach to the SEA dynamics leads to a coupling of load-side inertia $\bm{M}(\cdot)$ and external torques $\bm{\tau}_{\text{ext}}$. Due to the considered lack of force/torque sensors as stated in \Cref{ass:data}, this coupling cannot be resolved and the inertia error would persist. 

To overcome this problem we 
learn the residual torque $\bm{f}(\cdot)$ instead. To this end, we rearrange the motor-side dynamics \eqref{eq:motorside} and substitute the spring torque $\bm{\tau}_s(\cdot,\cdot)$ into the load-side \eqref{eq:unknwn_load} yielding the inverse dynamics expression
\begin{equation}
    	\bm{f}(\bm{z}) = \bm{\tau}_\text{h,act} + \bm{\tau}_m - \big(\bm{\tau}_l(\bm{z}) + \bm{J}(\bm{\theta}_m)\ddot{\bm{\theta}}_m + \bm{D}_m \dot{\bm{\theta}}_m\big),
	\label{eq:loadsidewmotor}
\end{equation}
where $\bm{\tau}_l(\bm{z}) \coloneqq \bm{M}(\bm{q})\ddot{\bm{q}} + \bm{C}(\bm{q},\dot{\bm{q}})$. 
We make following assumption for $\bm{\tau}_\text{h,act}$  
during the data generation.
\begin{assumption}\label{ass:passive_human}
    The human operator is assumed passive during training, i.e., the actively generated torque $\bm{\tau}_\text{h,act}=\bm{0}$. 
\end{assumption}

Similar assumptions are common in learning-augmented observer design \cite{santina2020softrobotest,liu2021sensorless,Evangelisti} and are necessary to decompose discrepancies due to modelling errors and external disturbances, which is an inherently ill-posed problem \cite{illposed}. 
Since the motor torque $\bm{\tau}_m$, load-side kinematics $\{\bm{q}, \dot{\bm{q}}, \ddot{\bm{q}}\}$ and motor kinematics $\{\bm{\theta}_m, \dot{\bm{\theta}}_m, \ddot{\bm{\theta}}_m\}$ are measurable due to \Cref{ass:data} and 
a model of the 
SEA dynamics is available
, the right hand side of \eqref{eq:loadsidewmotor} can directly be computed given \Cref{ass:passive_human}. Therefore, it remains to define a sampling rate $1/t$, $t\in \mathbb{R}_+$ at which measurements of $\bm{z}$ are taken and values $\bm{f}(\cdot)$ are computed, such that a training data set 
\begin{align}
	\{(\bm{x}^{(k)}=\bm{z}(kt), \enspace \bm{y}^{(k)}=\bm{f}(kt)\}_{k=0}^K
\end{align}
is aggregated based on the measurements. Using the data set, we can update an independent GP for each target dimension of $\bm{y}^{(k)}\!$, i.e., for each $i\!=\!1,\ldots,n$ a GP is updated using a training pair $(\bm{x}^{(k)},y_i^{(k)})$. In order to employ the GP models in the observer, their predictions and variances are concatenated into a vector ${\bm{\mu}}(\bm{x})\!=\![\mu_1(\bm{x})\ \cdots\  \mu_n(\bm{x})]^T$ and $\bm{\sigma}^2(\bm{x})=[\sigma^2_1(\bm{x})\ \cdots\ \sigma^2_n(\bm{x})]^T$, respectively. 

\subsection{Learning-based Augmented-State Torque Observer} \label{subsec:GPAKF}
To 
enhance the prediction of the observer using the learned model,  we deploy an augmented-state Kalman filter (AKF), where we define the augmented state vector $\bm{x}\in\mathbb{R}^{5n}$ as 
\begin{equation}
    \bm{x} = \begin{bmatrix}\bm{\theta}_m^\intercal & \bm{\theta}_s^\intercal &\dot{\bm{\theta}}_m^\intercal & \dot{\bm{\theta}}_s^\intercal & \bm{\tau}_{\text{h,act}}^\intercal \end{bmatrix}^\intercal.
\end{equation}
Rearranging the motor-side dynamics \eqref{eq:motorside} yields
\begin{equation}
    \ddot{\bm{\theta}}_m=\bm{J}^{-1}(\bm{\tau}_m+\bm{\tau}_s(\bm{\theta}_s,\dot{\bm{\theta}}_s)-\bm{D}_m \dot{\bm{\theta}}_m).
    \label{eq:thetadd}
\end{equation}
Substituting \eqref{eq:thetadd} in \eqref{eq:unknwn_load} and applying ${\bm{\theta}_s\!=\!\bm{q}\!-\!\bm{\theta}_m}$, we get
\begin{equation}
\begin{split}
        \ddot{\bm{\theta}}_s=\bm{M}^{-1}\left(\bm{\tau}_{\text{h,act}} - \bm{f}(\bm{z}) - \bm{C}(\bm{x}) -\bm{\tau}_s(\bm{\theta}_s,\dot{\bm{\theta}}_s)\right)\\
        -\bm{J}^{-1}\left(\bm{\tau}_m+\bm{\tau}_s(\bm{\theta}_s,\dot{\bm{\theta}}_s)-\bm{D}_m \dot{\bm{\theta}}_m\right).
\end{split}
\label{eq:xsdd}
\end{equation}
Thus, based on \eqref{eq:thetadd} and \eqref{eq:xsdd}, the system state is driven by the nonlinear dynamics 
\begin{equation}
    \dot{\bm{x}}=\bm{f}_{\mathrm{nom}}(\bm{x},\bm{u})-\bm{I}_{f}\bm{M}^{-1}\bm{f}(\bm{z}), \label{eq:unknown}
\end{equation}
where, omitting the dependencies on entries of $\bm{x}$ for improved readability, we define the nominal dynamics
\begin{align}\label{eq:nonlindyn}
    &\bm{f}_{\mathrm{nom}}(\bm{x},\bm{u})=\\
    &\begin{bmatrix} 
\dot{\bm{\theta}}_m \\
\dot{\bm{\theta}}_s \\
\bm{J}^{-1}(\bm{\tau}_m+\bm{\tau}_s-\bm{D}_m \dot{\bm{\theta}}_m) \\
\bm{M}^{-1}\!\!\left(\bm{\tau}_{\text{h,act}}  \!-\! {\bm{C}} \!-\! \bm{\tau}_s\right)\!-\!\bm{J}^{-1}\!\!\left(\bm{\tau}_m\!+\!\bm{\tau}_s\!-\!\bm{D}_m \dot{\bm{\theta}}_m\right) \\
\bm{0}_{n\times 1}
\end{bmatrix},\nonumber
\end{align}
$\bm{u}=\tau_m$ and ${\bm{I}_{\mu} = [\bm{0}_{n}\ \bm{0}_{n}\ \bm{0}_{n}\ \bm{I}_{n}\  \bm{0}_{n}]^\intercal}$.
Note that \eqref{eq:nonlindyn} imposes zero-order dynamics for the active torque, i.e., ${\dot{\bm{\tau}}_{\text{h,act}}=\bm{0}}$, 
which is sufficient to model piece-wise constant or slowly varying behavior \cite{wahrburg2015momentumkf}. However, if more information is available, it is straightforward to include it by introducing a torque state $\bm{\omega}$ with arbitrary dynamics $\dot{\bm{\omega}}=s(\bm{\omega})$. \looseness=-1

To flexible adapt to the unknown dynamics, 
we exploit the learned model and substitute $\bm{f}(\cdot)$ in \eqref{eq:unknown} with the concatenation of GP predictions ${\bm{\mu}}(\cdot)$. Thus, we write the nonlinear prediction model of the GP-enhanced AKF (GP-AKF) to \looseness=-1
\begin{equation}
    \bm{f}(\bm{x},\bm{u}) = \bm{f}_{\text{nom}}(\bm{x},\bm{u}) - \bm{I}_{\mu}\bm{M}^{-1}{\bm{\mu}}(\bm{z}).
    \label{eq:fwdynwtorqueresidual1}
\end{equation}
Since the prediction model \eqref{eq:fwdynwtorqueresidual1} is nonlinear, the Jacobian of $\bm{f}(\cdot,\cdot)$ is needed to apply the EKF algorithm. It is straightforward to compute the linearization for the nominal dynamics model $\bm{f}_{\text{nom}}(\cdot,\cdot)$ as demonstrated in \cite{park2017seakf}. For the Jacobian of the GP predictions 
we obtain
\begin{equation}
    \left.{\frac {\partial {\mu}}{\partial {\bm {x}}}}\right\vert _{{\hat {\bm {x}}}} =  \left(\frac{\partial \bm{k}(\bm{x})}{\partial \bm{x}}\right)^\intercal \!\! \left(\bm{K}+\sigma_{\mathrm{on}}^2\bm{I}_N \right)^{-1} \bm{y},
    \label{eq:moejac}
\end{equation}
which is easily computable for common kernel choices, such as the squared exponential kernel \eqref{eq:SE}, since
\begin{equation}
    \frac{\partial k(\bm{x}^{(i)},\bm{x})}{\partial x^{(i)}} = \frac{x-x^{(i)}}{l^2} k(\bm{x}^{(i)},\bm{x}).
    \label{eq:kernelderivative}
\end{equation}
Then, introducing the notation ${\bm{F}_\text{nom} \coloneqq {\frac {\partial \bm{f}_{\text{nom}}}{\partial {\bm {x}}}}\big\vert_{{\hat {\bm {x}}}}}$ and ${\bm{F}_{\text{GP}}\coloneqq {\frac {\partial \tilde{\bm{\mu}}}{\partial {\bm {x}}}}\big\vert_{{\hat {\bm {x}}}}}$, we get the linearized GP-AKF dynamics
\begin{equation}
\bm{F}(\bm{x},\bm{u})=\!\bm{F}_\text{nom}(\bm{x},\bm{u})-\bm{I}_{\mu}\bm{M}^{-1}\!\bm{F}_{\text{GP}}(\bm{x},\bm{u}),
\label{eq:linearization}
\end{equation}
In order to additionally leverage the inherent uncertainty quantification provided by the GP, we incorporate the GP posterior variance ${\bm{\sigma}}^2(\cdot)$ in the EKF process noise covariance
\begin{equation}
    \bm{Q}(\bm{x}) = \bm{\Sigma}_\text{nom}+{\bm{\Sigma}}(\bm{x}),
    \label{eq:gpakf_covariance}
\end{equation}
where $\bm{\Sigma}_\text{nom}$ denotes the process noise covariance of the nominal model and ${\bm{\Sigma}}(\cdot)$ the covariance induced by the GP-model \looseness=-1
\begin{equation}
    {\bm{\Sigma}}(\bm{x})=\bm{I}_{\mu}\bm{M}^{-1}\text{diag}({\bm{\sigma}}^2(\bm{x}))\bm{M}^{-\intercal}\bm{I}^\intercal_{\mu}.
    \label{eq:GPvarfwd}
\end{equation}
Finally, we apply a time-discretization procedure, e.g., as described in \cite{wahrburg2015momentumkf}, to obtain the time-discrete, nonlinear GP-enhanced AKF. The resulting method is shown in \Cref{alg:torqueresidualAKF}, where, with a slight abuse of notation, we denote the time-discretized system using the same symbols and time-step $k$. In comparison to the EKF in \Cref{alg:ekf}, our proposed GP-AKF uses the learning-augmented model \eqref{eq:fwdynwtorqueresidual1} to predict the prior estimate $\hat{\bm{x}}_{k}^-$ (Alg. \ref{alg:torqueresidualAKF}, line 7) and exploits the uncertainty quantification of the GP in the covariance estimate (Alg. \ref{alg:torqueresidualAKF}, line 8-10). Note that using the GP-augmented covariance $\bm{Q}_k$ (Alg. \ref{alg:torqueresidualAKF}, line 9) to update the KF covariance $\bm{P}_{k}^-$ allows us to obtain an observer that explicitly considers local model uncertainty in the confidence measure of the state estimate. However, for the prediction of the prior covariance $\bm{P}_{k}^-$ (Alg. \ref{alg:torqueresidualAKF}, line 10) the linearized model $\bm{F}_k$ is used to retain a Gaussian distribution. Thereby, an error is introduced and the estimated covariance does not represent the true covariance in general. Nevertheless, it is possible to derive guaranteed estimation error bounds for the proposed GP-AKF method, which we demonstrate in the following section.

\begin{algorithm}[t!]
\small
\caption{GP-enhanced AKF (GP-AKF)}
\begin{algorithmic}[1]
\State \textbf{Initialize} $\hat{\bm{x}}_0$, $\bm{P}_0$
\State $\tilde{\bm{\mu}}_k \leftarrow \bm{0}_{n\times 1}, \tilde{\bm{\sigma}}^2_k\leftarrow \bm{0}_{n\times 1}$

\For{k = 1,\ ...,\ K}
\Statex \quad\ \textbf{Prediction step:}
\For{$i = 1,\ ...,\ n$}
    \State $\tilde{\mu}_{k,i}, {\sigma}^2_{k,i} = \texttt{GP}_i.\texttt{PREDICT}(\hat{\bm{z}}_{k-1})$
\EndFor
\State $\hat{\bm{x}}_{k}^- = \bm{f}^k_{\text{nom}} - \bm{I}_{\mu}\bm{M}^{-1}\tilde{\bm{\mu}}_k$ 
\State $\bm{F}_k =\bm{F}_\text{nom}-\bm{I}_{\mu}\bm{M}^{-1}\!\bm{F}_{\text{GP}}$ 
\State $\bm{Q}_k$ = $\bm{\Sigma}_{\text{nom},k}+\bm{I}_{\mu}\bm{M}^{-1}\text{diag}({\bm{\sigma}}^2_k)\bm{M}^{-\intercal}\bm{I}^\intercal_{\mu}$ 
\State $\bm{P}_{k}^- = \bm {F}_{k} \bm {P}_{k-1} \bm {F} _{k}^T + \bm {Q} _{k}$ 
\Statex \quad\ \textbf{Update step:}
\State identical to \cref{alg:ekf}
\EndFor
\end{algorithmic}
\label{alg:torqueresidualAKF}
\end{algorithm}

\subsection{Guaranteed Error Bounds on the Estimated Torque} \label{subsec:Guarantees}

In order to derive estimation error bounds for the learning-based observer, we integrate the confidence sets of the GP prediction into the KF framework. To this end, we adopt the elliptical set-membership approach \cite{noack2010bounding}, where the system state $\bm{x}$ is expressed as a set of Gaussian distributions
\begin{equation}
    \{\bm{x}\sim\pazocal{N}(\Bar{\bm{x}}+\bm{d},\bm{\Sigma})\  |\ \bm{d}\in\pazocal{E}(\bm{0},\bm{X})\subset \mathbb{R}^d\},
    \label{eq:gaussianset}
\end{equation}
with mean $\Bar{\bm{x}}$ and unknown but bounded perturbation $\bm{d}$ 
described by the positive semi-definite shape matrix ${\bm{X}\in \mathbb{R}^{d\times d}}$. Thus, the set \eqref{eq:gaussianset} is generated by a set of Gaussian distributions with covariance $\bm{\Sigma}$, which are centered around a set of means $\pazocal{X} = \pazocal{E}(\Bar{\bm{x}},\bm{X})$. 
Furthermore, the confidence set of a Gaussian distributed random variable $\bm{\xi} \sim \pazocal{N}(\bar{\bm{\xi}},\bm{C})$ for a probability level $\delta\in(0,1)$ is defined by an ellipsoid $\pazocal{E}(\bar{\bm{\xi}},s\bm{C})$ such that
\begin{equation}
    Pr\big(\bm{\xi}\in\pazocal{E}(\bar{\bm{\xi}},s\bm{C})\big)\geq 1-\delta
    \label{eq:gaussianellipsoid}
\end{equation}
where the scalar $s$ is determined by the chi-square distribution for the selected probability level \cite{noack2010bounding}. Thus, based on \eqref{eq:gaussianellipsoid}, an ellipsoidal confidence set $\pazocal{C}$ can be obtained for $\bm{x}$
\begin{equation}
    \pazocal{C} = \pazocal{E}(\Bar{\bm{x}},\bm{X}) \oplus \pazocal{E}(\bm{0},s\bm{\Sigma}), 
    \label{eq:confidenceset}
\end{equation}
satisfying $Pr(\bm{x}\in\pazocal{C})\!\geq\! 1\!-\!\delta$. To derive a confidence set $\pazocal{C}$ for our GP-AKF, we utilize the fact that GP regression provides guaranteed error bounds on the regressed mean function \looseness=-1 
\begin{equation}
    Pr\big(|f_i(\bm{x})-\mu_i(\bm{x}) | \leq \beta\sigma_i(\bm{x}),\ \forall \bm{x}\in\mathbb{X}\big) \geq 1-\delta,
\label{eq:gpmeanbounds}
\end{equation}
where $\mathbb{X}\subset\mathbb{R}^n$ can be an arbitrary compact set and $\beta\in\mathbb{R}_+$ is a constant \cite{Srinivas2012, GPbounds}.
To integrate the GP error bounds in the KF set-membership approach, we first reformulate the interval bound \eqref{eq:gpmeanbounds} to an ellipsoid.
\begin{lemma} \label{lem:interval_bound}
    The interval bound $|f_i(\bm{x})-\mu_i(\bm{x}) | \leq \eta$ for every $\eta\in\mathbb{R}_+$ is equivalent to the one-dimensional ellipsoid inclusion $f_i(\bm{x})\in\pazocal{E}(\mu_i(\bm{x}),\eta^{2})$.
\end{lemma}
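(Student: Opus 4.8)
The plan is to unwind the definition of the ellipsoidal set given in the footnote in the special case of dimension one. Recall that for a center $\bm{c}$ and a positive (semi-)definite shape matrix $\bm{X}$, the set is $\pazocal{E}(\bm{c},\bm{X})= \{ \boldsymbol{x}\in \mathbb{R}^n \mid (\boldsymbol{x}- {\boldsymbol{c}})^T \boldsymbol{X}^{-1} (\boldsymbol{x}- {\boldsymbol{c}}) \leq 1 \}$. Here the ambient dimension is $n=1$, the center is the scalar $\mu_i(\bm{x})$, and the shape "matrix" is the scalar $\eta^{2}$, which is strictly positive since $\eta\in\mathbb{R}_+$, hence invertible with inverse $\eta^{-2}$.

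The first step is therefore to substitute these quantities into the definition, obtaining that $f_i(\bm{x})\in\pazocal{E}(\mu_i(\bm{x}),\eta^{2})$ holds if and only if $\eta^{-2}\big(f_i(\bm{x})-\mu_i(\bm{x})\big)^{2}\leq 1$, i.e. if and only if $\big(f_i(\bm{x})-\mu_i(\bm{x})\big)^{2}\leq \eta^{2}$. The second step is to take square roots on both sides: since $t\mapsto\sqrt{t}$ is monotone on $\mathbb{R}_{\geq 0}$ and $\sqrt{a^2}=|a|$, this last inequality is equivalent to $|f_i(\bm{x})-\mu_i(\bm{x})|\leq |\eta| = \eta$, where the final equality uses $\eta>0$. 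Chaining the two equivalences gives the claim.

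There is essentially no obstacle here; the statement is a one-line algebraic identity and the only thing to be careful about is that the equivalence (rather than just one implication) requires $\eta>0$ so that the quadratic form is well defined and the square-root step is reversible — both of which are guaranteed by the hypothesis $\eta\in\mathbb{R}_+$. The value of the lemma is purely presentational: it lets us later phrase the GP error bound \eqref{eq:gpmeanbounds} as an ellipsoidal inclusion so that it can be combined, via the Minkowski sum, with the ellipsoidal confidence sets of the Kalman filter in \eqref{eq:confidenceset}.
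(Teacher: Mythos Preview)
Your proof is correct and essentially identical to the paper's: both arguments amount to the observation that $|f_i(\bm{x})-\mu_i(\bm{x})|\leq\eta$ is equivalent to $(f_i(\bm{x})-\mu_i(\bm{x}))\,\eta^{-2}\,(f_i(\bm{x})-\mu_i(\bm{x}))\leq 1$, which is precisely the defining inequality of $\pazocal{E}(\mu_i(\bm{x}),\eta^{2})$. The only cosmetic difference is that the paper starts from the interval bound and squares, whereas you start from the ellipsoid definition and take square roots.
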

\begin{proof}
    This can be seen directly by squaring the interval bound and dividing by the squared error bound, resulting in
    \begin{equation}
        (f_i(\bm{x})-\mu_i(\bm{x})) \,\eta^{-2} (f_i(\bm{x})-\mu_i(\bm{x})) \leq 1,
    \end{equation}
    which is exactly an ellipsoid $\pazocal{E}(\mu_i(\bm{x}),\eta^{2})$ centered around $\mu_i(\bm{x})$ with shape parameter $\eta^{2}$, concluding the proof.
\end{proof}

When considering multi-dimensional prediction with $n$ independent GPs for each target dimension, as in \Cref{subsec:online_learning}, it is straightforward to enclose the $n$ one-dimensional ellipsoids by one $n$-dimensional ellipsoid. 
To this end, we define the pair of mean vector $\Tilde{\bm{\mu}}_i = [0\,\ldots\, {{\mu}}_i \,\ldots \,0]^\intercal$ and shape matrix $\Tilde{\bm{X}}_i=\text{diag}([0\, \ldots\, \eta_i^2(\kappa,\bm{x})\, \ldots\, 0])$.
Then taking the Minkowski sum of the $n$ ellipsoids defined by the pairs $\{\Tilde{\bm{\mu}}_i, \Tilde{\bm{X}}_i\}$ yields an outer approximation
\begin{equation}
    \pazocal{E}(\Tilde{\bm{\mu}},\Tilde{\bm{X}}) \supseteq \bigoplus_{i=1}^n \pazocal{E}(\Tilde{\bm{\mu}}_i,\Tilde{\bm{X}}_i),
    \label{eq:GPboundsellipse}
\end{equation}
with shape matrix $\Tilde{\bm{X}}$ that bounds the error of the GP prediction $\Tilde{\bm{\mu}}$ with probability $1-\delta$.
Hence, using the elliptical set-membership formalism \eqref{eq:gaussianset} together with the ellipsoidal GP error bound \eqref{eq:GPboundsellipse}, the error sources of the GP-AKF can be decomposed 
for each source separately to derive a novel error bound for the overall approach. The derivations of the estimation error bounds is presented in the following result.

\begin{theorem} \label{theorem}
    Consider a nonlinear system 
    \begin{align}
            \bm{x}_{k+1} &= \bm{f}_k(\bm{x}_k, \bm{u}_k) + \bm{w}_k + \bm{d}_k, \\
            \bm{y}_{k} &= \bm{H} \bm{x}_{k}+ \bm{v}_{k}
            \label{eq:nonlin_theorem}
    \end{align}
    where $\bm{f}(\cdot, \cdot)$ is a GP-enhanced prediction model, 
    $\bm{H}$ is the linear measurement model, ${\bm{w}_{k} \sim \pazocal{N}(\bm{0},\bm{Q}_k)}$ and ${\bm{v}_{k} \sim \pazocal{N}(\bm{0},\bm{R}_k)}$ denote process and measurement noise and $\bm{d}_k$ represents the GP prediction error bound satisfying 
    \begin{equation}
        Pr\big(\bm{d}_k \in \pazocal{E}(\bm{0},\Tilde{\bm{X}}),\ \forall \bm{x}\in\mathbb{X}\big) \geq 1-\delta. 
        \label{eq:Dmuhat}
    \end{equation}
    Then, given a previous estimate $\hat{\bm{x}}_k$ and a confidence set $\pazocal{C}_k= \pazocal{E}(\hat{\bm{x}}_{k},\hat{\bm{X}}_{k}) \oplus \pazocal{E}(\bm{0},s\bm{P}_{k})$ satisfying $\bm{x}_{k}\in\pazocal{C}_{k}$, where $\hat{\bm{X}}_{k}$ parameterizes the ellipsoidal set of means of the previous time step,
    applying  \Cref{alg:torqueresidualAKF} yields a posterior estimate $\hat{\bm{x}}_{k+1}$ with a confidence set $\pazocal{C}_{k+1}$ containing the true state ${\bm{x}}_{k+1}$ with probability ${Pr\big({\bm{x}}_{k+1}\in\pazocal{C}_{k+1}\big)\geq 1-\delta}$.
\end{theorem}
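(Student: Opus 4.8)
The plan is to propagate the set-of-Gaussians representation \eqref{eq:gaussianset} through one prediction--update cycle of \Cref{alg:torqueresidualAKF}, showing that a triple $(\hat{\bm{x}}_k,\hat{\bm{X}}_k,\bm{P}_k)$ induces a triple $(\hat{\bm{x}}_{k+1},\hat{\bm{X}}_{k+1},\bm{P}_{k+1})$ of the same form, so that the confidence set $\pazocal{C}_{k+1}=\pazocal{E}(\hat{\bm{x}}_{k+1},\hat{\bm{X}}_{k+1})\oplus\pazocal{E}(\bm{0},s\bm{P}_{k+1})$ contains $\bm{x}_{k+1}$ with probability $1-\delta$ by \eqref{eq:gaussianellipsoid}--\eqref{eq:confidenceset}. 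Concretely, I would maintain the invariant that $\bm{x}_k$ admits the decomposition $\bm{x}_k=\hat{\bm{x}}_k+\bm{a}_k+\bm{\epsilon}_k$ with a bounded deterministic offset $\bm{a}_k\in\pazocal{E}(\bm{0},\hat{\bm{X}}_k)$ and a Gaussian term $\bm{\epsilon}_k\sim\pazocal{N}(\bm{0},\bm{P}_k)$; the hypothesis $\bm{x}_k\in\pazocal{C}_k$ is exactly this statement evaluated at level $1-\delta$. The error sources entering one step --- the bounded GP error $\bm{d}_k$ of \eqref{eq:Dmuhat}, the process noise $\bm{w}_k$, and the measurement noise $\bm{v}_k$ --- are tracked separately, the first routed into the ellipsoidal set of means and the latter two into the Gaussian covariance.

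\textbf{Prediction step.} I would Taylor-expand the GP-enhanced model $\bm{f}_k$ about $\hat{\bm{x}}_k$, writing $\bm{f}_k(\bm{x}_k,\bm{u}_k)=\bm{f}_k(\hat{\bm{x}}_k,\bm{u}_k)+\bm{F}_k(\bm{a}_k+\bm{\epsilon}_k)+\bm{r}_k$, where $\bm{F}_k$ is precisely the linearization \eqref{eq:linearization} used in \Cref{alg:torqueresidualAKF} and $\bm{r}_k$ collects higher-order terms. Regrouping $\bm{x}_{k+1}=\bm{f}_k(\hat{\bm{x}}_k,\bm{u}_k)+\bm{F}_k\bm{a}_k+\bm{d}_k+\bm{r}_k+(\bm{F}_k\bm{\epsilon}_k+\bm{w}_k)$, the Gaussian part $\bm{F}_k\bm{\epsilon}_k+\bm{w}_k$ is $\pazocal{N}(\bm{0},\bm{F}_k\bm{P}_k\bm{F}_k^\top+\bm{Q}_k)=\pazocal{N}(\bm{0},\bm{P}_{k+1}^-)$, i.e. exactly the prior covariance obtained with the GP-inflated $\bm{Q}_k$ of \eqref{eq:gpakf_covariance}. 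The offset part $\bm{F}_k\bm{a}_k+\bm{d}_k$ lies in the Minkowski sum $\pazocal{E}(\bm{0},\bm{F}_k\hat{\bm{X}}_k\bm{F}_k^\top)\oplus\pazocal{E}(\bm{0},\tilde{\bm{X}})$, using that the image of an ellipsoid under $\bm{F}_k$ is an ellipsoid and that $\bm{d}_k\in\pazocal{E}(\bm{0},\tilde{\bm{X}})$ by \eqref{eq:GPboundsellipse} and \eqref{eq:Dmuhat}; this I would over-approximate by a single ellipsoid $\pazocal{E}(\bm{0},\hat{\bm{X}}_{k+1}^-)$ via the standard ellipsoidal-sum bound of \cite{noack2010bounding}, giving a prior representation $(\hat{\bm{x}}_{k+1}^-,\hat{\bm{X}}_{k+1}^-,\bm{P}_{k+1}^-)$ of the same form (modulo $\bm{r}_k$).

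\textbf{Update step.} Since $\bm{H}$ is linear and the update coincides with that of \Cref{alg:ekf}, it is the affine map $\hat{\bm{x}}_{k+1}=(\bm{I}-\bm{K}_{k+1}\bm{H})\hat{\bm{x}}_{k+1}^-+\bm{K}_{k+1}\bm{y}_{k+1}$ with $\bm{y}_{k+1}=\bm{H}\bm{x}_{k+1}+\bm{v}_{k+1}$. Substituting the prior decomposition, the offset ellipsoid is pushed forward to $\pazocal{E}(\bm{0},(\bm{I}-\bm{K}_{k+1}\bm{H})\hat{\bm{X}}_{k+1}^-(\bm{I}-\bm{K}_{k+1}\bm{H})^\top)=:\pazocal{E}(\bm{0},\hat{\bm{X}}_{k+1})$, while the Gaussian term combines with $\bm{v}_{k+1}\sim\pazocal{N}(\bm{0},\bm{R}_{k+1})$ and, by the usual optimal-gain identity, has covariance $\bm{P}_{k+1}=(\bm{I}-\bm{K}_{k+1}\bm{H})\bm{P}_{k+1}^-$ --- i.e. the set-membership KF update of \cite{noack2010bounding} applied to the GP-augmented filter. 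Hence $\bm{x}_{k+1}$ satisfies the invariant at step $k+1$, and $\bm{x}_{k+1}\in\pazocal{C}_{k+1}$ whenever (i) the uniform GP event \eqref{eq:Dmuhat} holds and (ii) the Gaussian realization falls inside its $(1-\delta)$-confidence ellipsoid; since (i) is a single event in $\bm{x}$ and $k$, conditioning on it (or a union bound with the budget split between the two) yields $Pr(\bm{x}_{k+1}\in\pazocal{C}_{k+1})\geq 1-\delta$.

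\textbf{Main obstacle.} The delicate point is that the ellipsoidal set of means is propagated through the nonlinear $\bm{f}_k$ using only the Jacobian $\bm{F}_k$, which is an outer approximation of the true image only up to the residual $\bm{r}_k$; rigour therefore requires either that $\bm{f}_k$ be affine over $\pazocal{C}_k$ (so $\bm{r}_k=\bm{0}$, the setting of \cite{noack2010bounding}) or that a bound on $\bm{r}_k$ be derived on the compact set $\pazocal{C}_k$ and absorbed into $\tilde{\bm{X}}$ --- in line with the remark preceding the theorem that the linearization introduces an error. A secondary, purely technical issue is the choice of the free parameter in the ellipsoidal-sum over-approximation so that $\hat{\bm{X}}_{k+1}$ stays as tight as possible while preserving the inclusion; note also that the GP posterior variance enters both $\bm{Q}_k$ and $\tilde{\bm{X}}$, which is conservative but harmless, as it only enlarges $\pazocal{C}_{k+1}$.
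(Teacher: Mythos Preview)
Your proposal is correct and follows essentially the same route as the paper: maintain the set-of-Gaussians invariant, push the ellipsoid of means through the linearized model, add the GP error ellipsoid $\pazocal{E}(\bm{0},\tilde{\bm{X}})$ via Minkowski sum, propagate the stochastic covariance to $\bm{P}_{k+1}^-$, and apply the affine update. The one place where the paper is slightly more explicit than your write-up is exactly the point you flag as the ``main obstacle'': the paper does not absorb the linearization residual $\bm{r}_k$ into $\tilde{\bm{X}}$ but instead bounds it over the compact set $\pazocal{C}_k$ by a \emph{separate} ellipsoid $\pazocal{E}(\bm{0},\bm{X}_k^f)$ (computed coordinate-wise, following \cite{noack2010bounding}, which already treats the nonlinear case rather than only the affine one), so that the prior set of means is the three-term sum $\pazocal{E}(\bm{f}_k(\hat{\bm{x}}_k,\bm{u}_k)+\bm{B}_k\bm{u}_k,\bm{A}_k\hat{\bm{X}}_k\bm{A}_k^\top)\oplus\pazocal{E}(\bm{0},\tilde{\bm{X}})\oplus\pazocal{E}(\bm{0},\bm{X}_k^f)$; apart from this bookkeeping difference, your argument and the paper's coincide.
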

\begin{proof}
    The first-order approximation of $\bm{f}(\cdot, \cdot)$ yields 
    \begin{equation}
        \Bar{\bm{f}}_k(\bm{x}_k,\bm{u}_k)= \bm{f}_k(\hat{\bm{x}}_k, \bm{u}_k) + \bm{A}_k(\bm{x}_k-\hat{\bm{x}}_k) + \bm{B}_k\bm{u}_k,
        \label{eq:ekf_linearization}
    \end{equation}
    with linearization $\bm{A}_k\in\mathbb{R}^{d\times d}$ and $\bm{B}_k\in\mathbb{R}^{d\times r}$, resulting in the linear state propagation for the EKF prediction
    \begin{equation}
            \Bar{\bm{x}}_{k+1} = \Bar{\bm{f}}_k(\bm{x}_k,\bm{u}_k) + \bm{w}_k + \bm{d}_k.
        \label{eq:linearmappingwithdx}
    \end{equation}
    Since the current confidence set $\bm{x}_k\in\pazocal{C}_k$ is given, we can bound the linearization error $\bm{\varepsilon}_{k+1}=\bm{x}_{k+1} - \Bar{\bm{x}}_{k+1}$ to the set
    \begin{equation}
        \Big\{\bm{f}_k(\bm{x}_k, \bm{u}_k) - \Bar{\bm{f}}_k(\bm{x}_k,\bm{u}_k) \ |\ \bm{x}_k\in\pazocal{C}_k\Big\},
        \label{eq:linerrsetdx}
    \end{equation}
    which is independent of $\bm{w}_k$ and $\bm{d}_k$ since they cancel out.
    Following the derivations in \cite{noack2010bounding}, the linearization error can bounded by an over-approximating ellipsoid denoted as such 
    \begin{equation}
        \bm{\varepsilon}_{k+1}\in\pazocal{E}(\bm{0},\bm{X}_k^f),
        \label{eq:linearerrellipsoid}
    \end{equation}
    where the elements in $\bm{X}_k^f$ are computed by taking the maximum error in each dimension seperately \cite{noack2010bounding}.
    Finally, we propagate the previous set of means $\pazocal{E}(\hat{\bm{x}}_{k},\hat{\bm{X}}_k)$ using the linearized model \eqref{eq:linearmappingwithdx}, which, due to ellipsoids permitting affine transformations \cite{noack2010bounding}, yields the set of prior predictions
    \begin{align}
        \pazocal{E}({\bm{x}}_{k+1}^-,{\bm{X}}_{k+1}^-) = &\underbrace{\pazocal{E}\Big(\bm{f}_k(\hat{\bm{x}}_k, \bm{u}_k) + \bm{B}_k\bm{u}_k,\, \bm{A}_k\hat{\bm{X}}_k\bm{A}_k^\intercal\Big) }_{\text{linear propagation of } \pazocal{E}(\hat{\bm{x}}_{k},\hat{\bm{X}}_k)}  \nonumber \\ 
        &\hspace{0.3cm}\oplus \underbrace{\pazocal{E}\big(\bm{0},\Tilde{\bm{X}}\big)}_{\text{ GP prediction error}} \oplus\underbrace{\pazocal{E}\big(\bm{0},\bm{X}_k^f\big),}_{\text{linearization error}}
    \label{eq:predwLinErrsetandGPerrset}
    \end{align}
    where we bound the errors due to the GP prediction and linearization using \eqref{eq:Dmuhat} and \eqref{eq:linearerrellipsoid}. After computing the prior covariance $\bm{P}_{k+1}^-$ as described in \Cref{alg:torqueresidualAKF}, inserting the observation model $\bm{H}$ yields the Kalman gain 
    \begin{equation}
        \bm {K} _{k+1}=\bm {P} _{k+1}^- \bm {H}^\intercal \big( \bm {H}\bm {P} _{k+1}^- \bm {H}^\intercal +\bm {R}_{k+1}\big)^{-1}
    \end{equation}
    Thus, the filtering step is carried out on the prior prediction set $\pazocal{E}({\bm{x}}_{k+1}^-,{\bm{X}}_{k+1}^-)$ to obtain a set of posterior means
    \begin{align}
    \pazocal{E}(\hat{\bm{x}}_{k+1},\hat{\bm{X}}_{k+1})=(\bm{I}-\bm{K}_{k+1}\bm{H})&\pazocal{E}(\hat{\bm{x}}^-_{k+1},\hat{\bm{X}}^-_{k+1}) \nonumber\\ &+ \bm {K} _{k+1}  \bm {y} _{k+1} \nonumber
    \end{align}
    Finally, with posterior covariance ${\bm{P}_{k+1}\!\!=\!(\bm{I}\!\!-\!\bm{K}_{k+1}\!\bm{H})\bm {P} _{k+1}^-}$, we obtain a confidence set 
    \begin{align}
        \pazocal{C}_{k+1} = \pazocal{E}(\hat{\bm{x}}_{k+1},\hat{\bm{X}}_{k+1}) \oplus \pazocal{E}(\bm{0},s\bm{P}_{k+1}),
    \end{align}
    which, given an appropriate choice of scalar $s$ determined by the probability level ${\delta}$, contains ${\bm{x}}_{k+1}$ with probability ${Pr\big({\bm{x}}_{k+1}\in\pazocal{C}_{k+1}\big)\geq 1-\delta}$, concluding the proof.
\end{proof}

Intuitively, \Cref{theorem} propagates the confidence set $\pazocal{C}_{k}$ for the Gaussian distributed state $\bm{x}_k$ by linearly propagating the set of means $\pazocal{E}(\hat{\bm{x}}_{k},\hat{\bm{X}}_k)$ using a first-order linearization and separately considering the maximum impact of neglected nonlinearities and GP prediction errors as additive and bounded perturbations. Since each term is represented by an ellipsoid, an outer-approximating ellipsoid can be obtained, which represents the propagated set of means. By additionally propagating the covariance matrix, stochastic uncertainties due to process and measurement noise are considered, thus, allowing us to obtain a probabilistic bound for the posterior estimate. Note that while we derive the estimation guarantees for a linear measurement model, it is straightforward to extend the procedure to nonlinear observation models by again applying a lineariztion and bounding the linearization error using over-approximating ellipsoid.
The overall scheme including the decomposition of error sources and uncertainty propagation is illustrated in \Cref{fig:errorboundscheme}.
\Cref{theorem} demonstrates that our proposed method permits error bounds for the estimated augmented-state, which in our considered application scenario translates to error bounds on the estimated active human torque. Thus, the GP-AKF is well-suited for use in conjunction with down-stream component, e.g., cooperative controllers, where reliable estimates are essential to guarantee safety of the interaction. \looseness=-1

\begin{figure}[t!]
	\centering
	\includegraphics[width=0.48\textwidth]{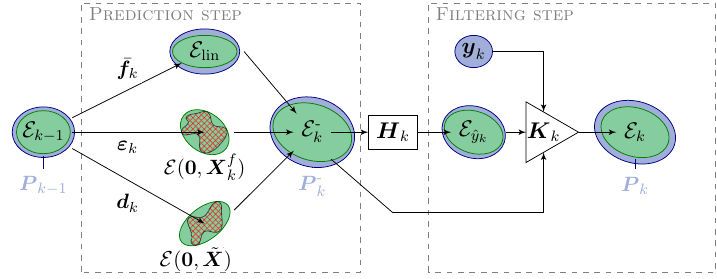}
	\caption[Ellipsoid error bounding scheme]{Ellipsoid error bounding scheme. The sets of means are represented in green color, Gaussian densities bounded to a certain sigma level are represented by blue color.}
	\label{fig:errorboundscheme}
\end{figure}

\section{Experimental Evaluation}

For the evaluation of the proposed learning-based force observer, we perform a human-robot interaction experiment with a one DoF SEA elbow-exoskeleton introduced in \Cref{subsec:experimentsetup}. 
First, we demonstrate that the model inference correctly learns the human passive dynamics in \Cref{subsec:pashumanwithoutweight}. 
Then our proposed method is successfully applied to estimate dynamic torques exerted by a subject in \Cref{subsec:acthumanstatictask}.

\subsection{Human-Exoskeleton Interaction Experimental Setup} \label{subsec:experimentsetup}

\textcolor{black}{The experiments are executed on an one DoF version of a SEA-driven shoulder-elbow exoskeleton introduced in\cite{Ercolini_Trigili_Baldoni_Crea_Vitiello_2019}.  The actuation unit is equipped with a DC brushless motor, a 1:100 reduction stage and a custom torsional spring running 
with a sampling rate of 100Hz for data aggregation. The load side angle $q$ and motor side angle $\theta_m$ are measured using two 19-bit absolute encoders and the motor torque $\tau_m$ is computed from the control input and torque constant.} Angular velocity and acceleration are obtained by numerical differentiation of the encoder signals together with a moving average filter. All experiments are performed on an Intel i5-10500 with 3.1GHz running Matlab2021b and LabVIEW2018. \looseness=-1

In our setup, the exoskeleton is attached to the subject's forearm with a cuff, while the torso is strapped to a fixed frame. Thereby, the participant's shoulder joint is fixed in place, gravitational torques of the upper arm are removed and only elbow flexion/extension movements can be performed. 
Each experiment consists of an online training phase on the task followed by an estimation phase. 
During the training phase the subject is instructed to remain passive, i.e., not resist or support the robot by intentional muscle activation. 
To minimize inconvenient calibration time, we perform online learning of the unknown dynamics during the training phase.
Additionally, due to the intrinsic compliance properties and torque limits of the SEA, it is guaranteed that the interaction forces applied to the human arm remain in safe regions. 

\subsection{Accuracy with Human Passive Dynamics Online Learning} \label{subsec:pashumanwithoutweight}

First, we demonstrate online model inference of the human passive dynamics and the principle applicability of \Cref{ass:passive_human}. 
To this end, we perform training and estimation on the same trajectory and instruct the participant to remain passive during both stages. Thus, given an accurate model inference during training, the estimated human torque is exactly $0$. 

In particular, we use sequences of sigmoid functions as reference trajectory to generate data due to their minimum-jerk profile, which are tracked by a PID position controller. Each sequence starts at an initial position $10\deg$ and moves to the target position $75\deg$ in $3.5$s. Repeating this sequence forwards and backwards 6 times yields a total of $N=2100$ training samples. For online learning we use the locally growing random tree of GPs (LoG-GP) approach \cite{Lederer2021b} with a maximum number $\bar{N}=100$ training samples per local model and the gradient-based method from \cite{TESFAZGI2023501} to optimize the hyperparameters of the GPs online during training. For our GP-AKF method, we determine the measurement noise matrix using the autocovariance least-squares method \cite{ODELSON2006303} to 
\begin{equation}
\boldsymbol{R}= \text{diag}\big(0.0461,\ 3.6\cdot10^{-6},\ 0.1288,\ 1.01\cdot10^{-5}\big),
\end{equation}
while we set the nominal process noise to 
\begin{equation}
\boldsymbol{\Sigma}_\text{nom}= \text{diag}\big(10^{-1},\ 10^{3},\ 10^{2},\ 10^{-9},\ 10^{-2}\big).
\end{equation}
The initial state $\hat{\boldsymbol{x}}_0$ is measured and accordingly the initial covariance $\boldsymbol{P}_0$ is set to the measurement noise covariance $\boldsymbol{R}$.

We compare our proposed GP-enhanced torque observer to a standard augmented-state KF (AKF) \cite{park2017seakf} and to the spring torque $\tau_s(\cdot,\cdot)$ \eqref{eq:spring_torque}, which is frequently used as a measure of the interaction torques \cite{sea95}. The estimated active human torques $\tau_{\text{h,act}}$ for all methods are depicted in \Cref{fig:zeroestimate}. Furthermore, a quantitative comparison of the estimation accuracy is provided in \Cref{tab:pass_torque}, where the achieved root mean square error (RMSE) is shown. It is clearly visible that both the AKF method and spring torque $\tau_s(\cdot,\cdot)$ fail to provide an accurate estimate of the active human torque. For the AKF this is primarily due to static friction effects that are not accounted for by the nominal dynamics model, thus, inducing large jumps in the active torque at time steps coinciding with turning points of the trajectory. While the spring torque is not effected by this friction, it still produces inaccurate results due to the non-negligible load-side dynamics. Finally, our proposed approach is able to recover an accurate estimate of the actively applied human torque during estimation, which implies that a correct model of the human passive dynamics was learned during the online training phase. Moreover, it can be seen that the uncertainty of the GP-AKFs shrinks quickly from the conservative initial value to a tight region around the estimate and indicates high confidence in the learned dynamics. The accurate estimation implies that the online model inference from \cref{subsec:online_learning} not only successfully learns the passive human model but even additional unmodelled effects such as friction.

\begin{figure}[t!]
    \centering
        \includegraphics[width=0.475\textwidth]{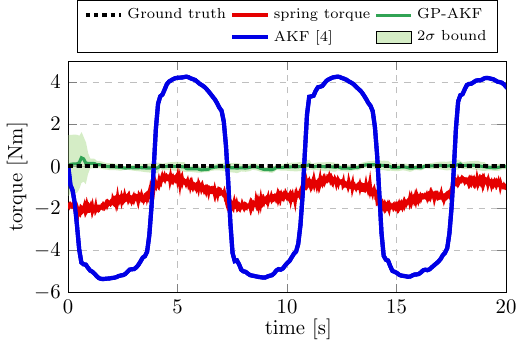}
         \caption[Estimated human torque for passive subject.]
         {Estimated active human torque $\hat{\tau}_\text{h,act}$ for a passively acting subject. The ground truth (zero) is depicted with a black, dotted line. Our proposed GP-AKF (green) estimates the zero torque correctly based on the learned passive human dynamics, while the standard AKF (dark blue) overshoots due to unmodelled friction effects. The spring torque (red) also provides inaccurate estimates. $2\sigma$ bounds based on the estimate covariance $\boldsymbol{P}_k$ are represented by the shaded areas.}
         \label{fig:zeroestimate}
\end{figure}

\textcolor{blue}{
\begin{table}[!t]
	\center
	\caption{RMSE for the estimated active human torque $\hat{\tau}_\text{h,act}$ for a passively acting subject. }
		\begin{sc}
			\begin{tabular}{l c c c}
				\toprule
				 & GP-AKF& AKF & spring torque \\
				\midrule
				RMSE [Nm]& $0.067$ & $4.175$ & $1.359$\\
				\bottomrule
			\end{tabular}
		\end{sc}
	\label{tab:pass_torque}
\end{table}
}

\vspace{-0.55cm}
\subsection{Estimation of Dynamic Human Torques} \label{subsec:acthumanstatictask}
To demonstrate the accurate estimation of dynamic torques exerted by the subject, we design an experiment where active and passive human torques can be decomposed such that the ground truth is available. 
In particular, the task is performed in a static position, e.g., $q_0=10\deg$, while the exoskeleton is in open loop torque-control. The motor-torque is set to $\tau_m = \tau_\text{des} + \tau_\text{comp}$, with $\tau_\text{comp}$ configured to compensate the static gravity and friction torques of the human-exoskeleton system and $\tau_\text{des}$ represents the desired active torque.

During training stage  $\tau_\text{des}=0$, thus, no joint movement is induced, since $\tau_\text{comp}$ compensates for all load-side dynamics. However, during the estimation phase, a torque profile ${\tau_\text{des} = \left(-2\cdot \cos\left(2\pi f\cdot t\right) + 2 \right),}$
with frequency $f=0.1$Hz is set. Since $\tau_\text{comp}$ already compensates the the load-side gravity and friction, $\tau_\text{des}$ 
perturbs the arm from the initial position ${q_0=10\deg}$. The subject is instructed to resist the perturbation and hold the arm in the static initial position. Accordingly, 
the active human torque then becomes ${\tau_{\text{h,act}}=-\tau_\text{des}}$. The training stage is carried out online for a duration of six seconds, yielding $N=600$ training samples.

\begin{figure}[t!]
    \centering
         \includegraphics[width=0.475\textwidth]{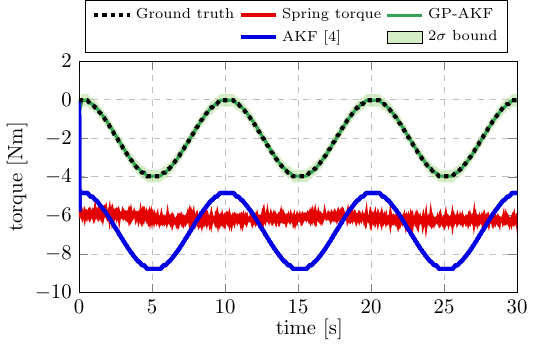}
         \caption[Torque estimate for active subject resisting the exoskeleton.]{Torque estimate $\hat{\tau}_\text{h,act}$ for an active subject. Ground truth is depicted in black, dotted. The GP-AKF (green) estimates the dynamic torque accurately based on the learned human dynamics. The AKF (dark blue) has a significant offset due to the unknown human dynamics and static friction. The spring torque (red) does not represent the active torque due to saturation effects of the spring deformation. }
         \label{fig:staticexperiment}
\end{figure}%

\begin{table}[!t]
	\center
	\caption{RMSE for the estimated active human torque $\hat{\tau}_\text{h,act}$ for an active subject resisting the perturbation torque. }
		\begin{sc}
			\begin{tabular}{l c c c}
				\toprule
				 & GP-AKF& AKF & spring torque \\
				\midrule
				RMSE [Nm]& $0.017$ & $4.805$ & $4.389$\\
				\bottomrule
			\end{tabular}
		\end{sc}
	\label{tab:act_torque}
\end{table}

The results of the described experiment are shown in \Cref{fig:staticexperiment} and \Cref{tab:act_torque}. We observe that our GP-AKF method achieves high accuracy in estimating the torque with which the human resists the perturbation. 
On the other hand, the AKF method (blue) does not recover the true active torque $\tau_\text{h,act}$. Due to the unmodelled passive dynamics, the AFK wrongfully estimates that the subject actively generates torques to compensate for the full motor torque $\tau_m$ instead of just $\tau_\text{des}$, 
resulting in the offset visible in \Cref{fig:staticexperiment}. 
In comparison to the observer approaches, the spring torque does not follow the periodic profile of the true human torque in \Cref{fig:staticexperiment}. This is due to nonlinear saturation effects in the spring deformation, thus, leading to poor estimates of interaction forces using the available spring torque model. Hence, only our proposed GP-AKF approach is able to provide accurate estimates of the actively exerted human torques by inferring a correct model of the human-exoskeleton dynamics online. \looseness=-1

\section{CONCLUSIONS}
In this paper, we propose a novel approach for estimating the actively applied human torque during interaction with an elastic joint robot by combining GP regression with and augmented-state KF. 
Through the uncertainty quantification of the GP, we obtain a confidence measure of the state estimate that considers local model uncertainty and additionally derive guaranteed estimation error bounds for the GP-BayesFilter. Finally, the effectiveness of the approach is demonstrated in human-exoskeleton experiments.

\bibliographystyle{IEEEtran}
\bibliography{myBib}

\end{document}